\documentclass[letterpaper,twocolumn]{article} 
\usepackage{times}  
\usepackage{helvet}  
\usepackage{courier}  
\usepackage{url}  
\usepackage{graphicx}  
\frenchspacing  
\setlength{\pdfpagewidth}{8.5in}  
\setlength{\pdfpageheight}{11in}  


\usepackage{amsmath}
\usepackage{amsfonts}
\usepackage{amssymb}
\usepackage{amsthm}
\usepackage{comment}

\usepackage{bmpsize}
\usepackage[colorinlistoftodos]{todonotes}
\usepackage{tikz}

\newtheorem{theorem}{Theorem}[section]
\newtheorem{corollary}{Corollary}[theorem]
\newtheorem{lemma}[theorem]{Lemma}

\theoremstyle{definition}
\newtheorem{definition}{Definition}[section]

\usepackage{booktabs} 
\usepackage{multirow}
\usepackage{caption}
\usepackage{subcaption}

\usepackage{listings}
\lstset{language=Python,
                basicstyle=\ttfamily,
                keywordstyle=\color{blue}\ttfamily,
                stringstyle=\color{brown}\ttfamily,
                commentstyle=\color{magenta}\ttfamily,
                morecomment=[l][\color{green}]{\#}
}
\usepackage[figure]{algorithm2e}

\pdfinfo{
/Title (A Capacity Scaling Law for Artificial Neural Networks)
/Author (Gerald Friedland, Mario Krell)}
\setcounter{secnumdepth}{1}  

\begin{document}
\title{A Capacity Scaling Law for Artificial Neural Networks}

\author{Gerald Friedland\footnote{University of California, Berkeley and Lawrence Livermore National Lab}, Mario Michael Krell\footnote{International Computer Science Institute, Berkeley. Both authors contributed equally to this paper.}\\
friedland1@llnl.gov, krell@icsi.berkeley.edu
} 

\date{September 5, 2018}

\maketitle

\begin{abstract}
We derive the calculation of two critical numbers predicting the behavior of perceptron networks. First, we derive the calculation of what we call the lossless memory (LM) dimension. The LM dimension is a generalization of the Vapnik--Chervonenkis (VC) dimension that avoids structured data and therefore provides an upper bound for perfectly fitting almost any training data. Second, we derive what we call the MacKay (MK) dimension. This limit indicates a 50\% chance of not being able to train a given function. Our derivations are performed by embedding a neural network into Shannon's communication model which allows to interpret the two points as capacities measured in bits. We present a proof and practical experiments that validate our upper bounds with repeatable experiments using different network configurations, diverse implementations, varying activation functions, and several learning algorithms. The bottom line is that the two capacity points scale strictly linear with the number of weights. Among other practical applications, our result allows to compare and benchmark different neural network implementations independent of a concrete learning task. Our results provide insight into the capabilities and limits of neural networks and generate valuable know how for experimental design decisions.
\end{abstract}

\section{Introduction}
Understanding machine learning, as opposed to using it as a black box, requires insights into the training and testing data, the available hypothesis space of a chosen algorithm, the convergence and other properties of the optimization algorithm, and the effect of generalization and loss terms in the optimization problem formulation. One of the core questions that machine learning theory focuses on is the complexity of the hypothesis space and what functions can be modeled. For artificial neural networks, this question has recently become relevant again as deep learning seems to outperform shallow learning. For deep learning, single perceptrons with a nonlinear, continuous gating function are concatenated in a layered fashion. Techniques like convolutional filters, drop out, early stopping, regularization, etc., are used to tune performance, leading to a variety of claims about the capabilities and limits of each of these algorithms (see for example~\cite{zhang2017}). Even though artificial neural networks have been popular for decades, understanding of the processes underlying them is usually based solely on anecdotal evidence in a particular application domain or task (see for example~\cite{morgan2012}).

In this article, we attempt to change this trend by analyzing and making measurable what could intuitively be called the intellectual capacity of a neural network. This is, quantifying which functions can be learned as function of the number of parameters of the model. We follow the notion that feed-forward neural networks, just like Hopfield networks, can be best understood as associative memory. Instead of memorizing the data, perceptron networks memorize a function of the data. That is, they associate given (noisy) input to trained input and then map that to a trained label.  A closer look at the error as a function of capacity then reveals that perceptrons go through two phase transitions, as indicated earlier by Wolfgang Kinzel~\cite{kinzel1998}, similar to the ones observed in the self-assembly of matter or the Ising model of ferromagnetism. As a result, it is impossible to make an artificial neural network that is sensitive to, but not disrupted by, new information once a certain threshold is reached. Our theoretical derivation, backed up by repeatable empirical evidence, shows the scaling of the capacity of a neural network based on two critical points, which we call lossless-memory (LM) dimension and MacKay (MK) dimension, respectively. The LM dimension defines the point of guaranteed operation as memory and the MK dimension defines the point of guaranteed 50\,\% forgetting, even for very high dimensional networks. The scaling of both points is upper bounded strictly linearly with the number of weights. 

\begin{figure*}
\centering
\includegraphics[width=0.7\textwidth]{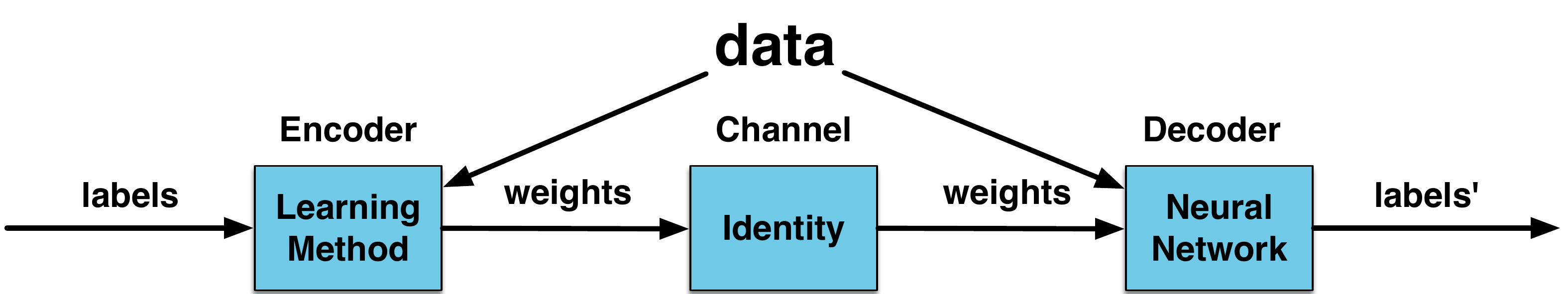}
\caption{\label{f:shannon}
Shannon's communication model applied to labeling in machine learning. A dataset consisting of $n$ sample points and the ground truth labeling of $n$ bits are sent to the neural network. The learning method converts it into a parameterization (i.\,e., network weights). In the decoding step, the network then uses the weights together with the dataset to try to reproduce the original labeling.}
\end{figure*}

\section{Related Work}
\label{sec:prior}
The perceptron was introduced in 1958~\cite{Rosenblatt1958} and since then has been extended in many variants, including but not limited to as described in~\cite{Crammer2006,Dekel2008,KrellPhd2015,KrellOc2015}. The perceptron uses a $k$-dimensional input and generates the output by applying a linear function to the input, followed by a gating function. The gating function is typically the identity function, the sign function, a sigmoid function, or the rectified linear unit (ReLU)~\cite{He2015,Nair2010}. Motivated by brain research~\cite{Feldman1982}, perceptrons are stacked together to networks and usually trained by chain rule (backpropagation)~\cite{Rumelhart1986,Rumelhart1988}. 

Even though perceptrons have been utilized for a long time, its capacities have been rarely explored beyond discussion of linear separability. Moreover, catastrophic forgetting has so far not been explained satisfactorily. Catastrophic forgetting~\cite{McCloskey1989,Ratcliff1990} describes the effect that when the net is first trained on one set of labels and then on another set of labels, it very quickly looses its capability to classify the first set of labels. Our interpretation is that one cause for this would be a capacity overflow in the second round of training.

One of the largest contributions to machine learning theory comes from Vapnik and Chervonenkis~\cite{Vapnik2000}, including the Vapnik-Chervonenkis (VC) dimension. The VC dimension has been well known for decades~\cite{Vapnik1971}. It is defined as the largest natural number of samples in a dataset that can be shattered by a hypothesis space.  This means that for a hypothesis space having VC dimension $D_{VC}$, there exists a dataset with $D_{VC}$ samples such that for any binary labeling ($2^{D_{VC}}$ possibilities) there exists a perfect classifier $f$ in the hypothesis space, that is, $f$ maps the samples perfectly to the labels. Due to perfect memorizing, it holds $D_{VC}=\infty$ for 1-nearest neighbor. Tight bounds have so far been computed for linear classifiers ($k+1$) as well as decision trees~\cite{Asian2009}. The definition of VC dimension comes with two major drawbacks, however. First, it considers only the potential hypothesis space but not other aspects like the optimization algorithm, or loss and regularization function that effect the choice of the hypothesis~\cite{Arpit2017}. Second, it is sufficient to provide only one example of a dataset to match the VC dimension. So given a more complex structure of the hypothesis space, the chosen data can take advantage of this structure. As a result, shatterability can be increased by increasing the structure of the data. While these aspects don't matter much for simple algorithms, it is a major point for deep neural networks. 

In~\cite{Vapnik1994}, Vapnik et al. suggest to determine the VC dimension empirically, but state in their conclusion that the described approach does not apply to neural networks as they are ``beyond theory". So far, the VC dimension has only been approximated for neural networks. For example, Mostafa argued loosely that the capacity must be bounded by $N^2$ with $N$ being the number of perceptrons~\cite{Mostafa1989}. Recently,~\cite{Shwartz2014} determined in their book that for a sigmoid activation function and a limited amount of bits for the weights, the loose upper bound of the VC dimension is $O(|E|)$ where $E$ is the set of edges and consequently $|E|$ the number of nonzero weights. Extensions of the boundaries have been derived for example for recurrent neural networks~\cite{Koiran1998} and networks with piecewise polynomials~\cite{bartlett1999almost} and piecewise linear~\cite{Harvey2017} gating functions. Another article~\cite{Koiran1997} describes a quadratic VC dimension for a very special case. The authors use a regular grid of $n$ times $n$ points in the two dimensional space and tailor their multilayer perceptron directly to this structure to use only $3n$ gates and $8n$ weights.

One measure that handles the properties of given data is the Rademacher complexity~\cite{Bartlett2001}. For understanding the properties of large neural networks, Zhang et al.~\cite{zhang2017} recently performed randomization tests. They show that their observed networks can memorize the data as well as the noise. This is proven by evaluating that their neural networks perfectly learn with random labels or with random data. This shows that the VC dimension of the analyzed networks is above the size of the used dataset. But it is not clear what the full capacity of the networks is. This observation also gives a good reason for why smaller size networks can outperform larger networks even though they have a lower capacity. Their capacity is still large enough to memorize the labeling of the data. A more elaborate extension of this evaluation has been provided by Arpit et al.~\cite{Arpit2017}. Our paper indicates the lower limit for the size of the network.

A different approach using information theory comes from Tishby~\cite{Tishby2015}. They use the information bottleneck principle to analyze deep learning. For each layer, the previous layers are treated as an encoder that compresses the data $X$ to some better representation $T$ which is then decoded to the labels $Y$ by the consecutive layers. By calculating the respective mutual information $I(X,T)$ and $I(T,Y)$ for each layer they analyze networks and their behavior during training or when changing the amount of training data. We describe the learning capabilities of neural networks using a different information theoretic view, namely the interpretation of neurons as memory cells. 

We are aware of recent questioning of the approach of discussing the memory capacity of neural networks~\cite{Arpit2017,zhang2017}. However, Occam's razor~\cite{blumer1987} dictates to follow the path of least assumptions and perceptrons were initially conceived as a "generalizing memory", as detailed for example, in the early works of  Widrow~\cite{widrow1962}. This approach has also been suggested by~\cite{Mostafa1989} and later explained in depth by MacKay~\cite{mackay2003}. In fact, initial capacity derivations for linear separating functions have already been reported by Cover~\cite{cover1965}. Also, the Ising model of ferromagnetism, which is clearly a model used to explain memory storage, has already been reported to have similarities to perceptrons~\cite{gardner1987,gardner1988} and also the neurons in the retina~\cite{tkacik2006ising}. 

\section{Capacity of a Perceptron}
\label{sec:mackay}
MacKay is the first one to interpret a perceptron as an encoder in a Shannon communication model (\cite{mackay2003}, Chapter~40). In our article, we use a slightly modified version of the model depicted in Fig.~\ref{f:shannon}.  We summarize his proof appearing in this section. The following definitions will be required.

\begin{definition}[VC Dimension~\cite{Vapnik2000}]
\label{def:vcd}
The VC dimension $D_{VC}$ of a hypothesis space $f$ is the maximum integer $D=D_{VC}$ such that \textit{some dataset} of cardinality $D$ can be shattered by $f$. Shattered by $f$ means that any arbitrary labeling can be represented by a hypothesis in $f$. If there is no maximum, it holds $D_{VC}=\infty$.
\end{definition}

\begin{definition}[General Position~\cite{mackay2003}]
    ``A set of points  $\{x_n\}$ in K-dimensional space are in general position 
    if any subset of size $\leq K$ is linearly independent, 
    and no $K + 1$ of them lie in a $(K −1)$-dimensional plane.''
\end{definition}

MacKay interprets a perceptron as an encoder in a Shannon communication model~\cite{shannon1948bell} (compatible to our interpretation in Fig.~\ref{f:shannon}). The input of the encoder are $n$ points in general position and a random labeling. The output of the encoder are the weights of a perceptron. The decoder receives the (perfectly learned) weights over a lossless channel. The question is then: Given the received set of weights and the knowledge of the data, can the decoder reconstruct the original labels of the points? In other words, the perceptron is interpreted as memory that stores a labeling of $n$ points relative to the data and the question is how much information can be stored by training a perceptron. In other words, we ask about the memory capacity of a perceptron. This communication definition not only has the advantage that the mathematical framework of information theory can be applied to machine learning, it also allows to predict and measure neural network capacity in the actual unit of information, bits.

The functionality of a perceptron is typically explained by the XOR example (i.\,e., showing that a perceptron with $2$ input variables, which can have $4$ states, can only model $14$ of the $16$ possible output functions). XOR and its negation cannot be linearly separated by a single threshold function of two variables and a bias. For an example of this explanation, see~\cite{Rojas1996}, section 3.2.2. MacKay effectively changes the computability question to a labeling question by asking: Given $n$ points, how many of the $2^n$ possible labelings in $\{0,1\}^n$
can be learned by the model without an error (rather than computing binary functions of $k$ variables). Just as done by~\cite{cover1965,Rojas1996}, MacKay uses the relationship between the input dimensionality of the data $k$ and the number of inputs $n$ to the perceptron, which is denoted by a function $T(n,k)$ that indicates the number of ``distinct threshold functions'' (separating hyperplanes) of $n$ points in general position in $k$ dimensions. The original function was derived by~\cite{schlaefli1852}. It can be calculated recursively as:

\begin{equation}
\label{eq:tnk}
T(n,k)=T(n-1,k)+T(n-1,k-1),
\end{equation}
where $T(n,1)=T(1,k)=2$ or iteratively:
\begin{equation}
\label{eq:tnk2}
T(n,k)=2\sum_{l=0}^{k-1}\genfrac(){0pt}{0}{n-1}{l}
\end{equation}

Namely, 
\begin{equation}
T(n,k)=2^n \text{ for } k\geq n.
\end{equation}
This allows to derive the VC dimension for the case $k=n$ where the number of possible binary labelings for $n$ points is $2^n$. Since $k=n$ and $T(n,n)=2^n$, all possible labelings of the input can be realized. 

When $k<n$, the $T(n,k)$ function follows a calculation scheme based on the Pascal Triangle~\cite{coolidge1949story}, which means that the 
bit loss due to incomplete shattering is still highly predictable. MacKay uses an error function based on the cumulative distribution 
of the standard Gaussian to perform that prediction and approximate the resulting distribution. More importantly, he defines a second point, which we call MK dimension. The MK dimension describes the largest number of samples such that typically only about $50\,\%$ of all possible labelings can be separated by the binary classifier. He proofs this point to be at $n=2k$ for large $k$ and illustrates that there is a sharp continuous drop in performance at this point. Since the sum of two independent normally distributed random variables is normal, with its mean being the sum of the two means, and its variance being the sum of the two variances, it is only natural that we will see in the following section that the MacKay point is linearly additive in the best case.

MacKay concludes that the capacity of a perceptron is therefore $2k$ as the error before that point is small. We follow Kinzel's physical interpretation~\cite{kinzel1998} and understand that the perceptron error function undergoes two phase transitions: A first order transition at the VC dimension and a continuous one at the MK dimension. Based on this interpretation, we predict that the different phases will play a role on structuring and explaining machine learning algorithms. We will therefore, throughout this paper, discuss the two points separately.

When comparing and visualizing $T(n,k)$ functions, it is only natural to normalize function
values by  the number of possible labelings $2^n$ and to normalize the argument by the  number
of inputs $k$ which is equal to the capacity of the perceptron. Figure~\ref{f:tnk} displays these normalized functions for different input dimensions $k$. The functions follows a clear pattern like the characteristic curves of circuit components in electrical engineering.

\section{Networks of Perceptrons}
\label{sec:combining}
For the remainder of this article, we will assume a feed-forward network. The weights are assumed to be real-valued and each unit has a bias, which counts as a weight. Note that no further assumptions about the architecture are required. Our derivations are upper bounds and therefore training-algorithm agnostic.

The definition of general position used in the previous section is typically used in linear algebra and is the most general case needed for a perceptron that uses a hyperplane for linear separation (see also Table~1 in~\cite{cover1965}). For neural networks, a stricter setting is required since neural networks can implement arbitrary non-linear separations. 

\begin{definition}[Random Position]
A set of points  $\{x_n\}$ in $K$-dimensional space is in random position, if and only if from any subset of size $<n$ it is not possible to infer anything about the positions of the remaining points.
\end{definition}

Note that random position implies general position, which was only excluding linear inference. Bear in mind that slightly distorted grid settings, as a minor modification of the example in~\cite{Koiran1997}, are in general position but not in random position. Random position is equivalent of saying that no inference is possible about the structure of the data and the only thing a machine learner can do is memorize. The only distribution that satisfies this constraint is the uniform distribution~\cite{gibbs1906}.   

As explained in Section~\ref{sec:prior}, it is possible to achieve very high VC dimension by the choice of very special datasets. This has not been an issue yet for learning theory but from a practitioner perspective, this has been criticized~\cite{shwartz2017,zhang2017,Arpit2017}. To avoid the reported problems and to be consistent with our embedding into the Shannon communication model, we therefore propose a generalization of the VC dimension which we call lossless memory dimension. 

\begin{definition}[Lossless Memory Dimension]
\label{def:led}
The lossless memory dimension $D_{LM}$ is the maximum integer number $D_{LM}$ such that for any dataset with cardinality $n\leq D_{LM}$ and points in \textit{random position}, \textit{all possible labelings} of this dataset can be represented with a function in the hypothesis space.
\end{definition}

Note that for a single perceptron $D_{LM}=D_{VC}$ because random position implies general position. As explained in Section~\ref{sec:prior}, we will name the corresponding point where loss is guaranteed MacKay dimension. 

\begin{definition}[MacKay Dimension]
\label{def:mkd}
The MacKay dimension $D_{MK}$ is the maximum integer $D_{MK}$ such that for any dataset with cardinality $n\leq D_{MK}$ and points in random position \textit{at least $50\%$} of all possible labelings of these datasets can be represented with a function in the hypothesis space \cite{mackay2003}.
\end{definition}

Consequently, a higher cardinality than $D_{MK}$ implies less than $50\%$ of the labelings can be represented. We will show that for an ideal perceptron network the limit is exactly $50\%$.

The proof becomes surprisingly easy, once one measures the memory capacity of each perceptron in bits~\cite{shannon1948bell}. In fact, it then becomes partly generalizable to any classifier treated as a black box. 

\subsection{Capacity}
Let us denote the lossless memory dimension of a binary classifier with $x$ parameters as $D_{LM}(x)$. Let $P$ be a set of points in random position. As usual, we denote as $|P|$ the number of points in the set. Furthermore, we denote as $|P|_{2}$ the number of bits used to represent these points. 

\begin{lemma}[Lossless Memory Dimension of Digital Classifiers]
\label{lem:maxlmd}
$D_{LM}(|P|) = |P|_2$
\end{lemma}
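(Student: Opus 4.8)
The plan is to prove the equality by two matching inequalities, both obtained by embedding the classifier into Shannon's communication model (Figure~\ref{f:shannon}) and counting the number of distinguishable internal states the classifier can occupy. Throughout I treat the classifier as a black box whose entire learned state is its parameter vector, stored digitally in $|P|_2$ bits, so that it can occupy at most $2^{|P|_2}$ distinct states and therefore realize at most $2^{|P|_2}$ distinct functions on a fixed dataset.

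For the \emph{upper bound} $D_{LM}(|P|)\le|P|_2$, I would fix a dataset of $n$ points in random position and let the labeling range over all $2^n$ elements of $\{0,1\}^n$. In the model the encoder (the learning algorithm) maps each labeling to a parameter value, and the decoder reconstructs the labeling deterministically from the parameters together with the data, which is known to both sides. For lossless memory every labeling must be recovered, so the map from labelings to parameter values must be injective: two distinct labelings producing the same parameters would be indistinguishable to the decoder on the shared data. Hence the number of attainable parameter states is at least $2^n$, giving $2^n\le 2^{|P|_2}$ and thus $n\le|P|_2$; taking the maximal such $n$ yields the bound. The nontrivial ingredient is that the $2^n$ labelings really are $2^n$ distinct, incompressible messages: because the points are in random position, the only consistent distribution is the uniform (maximum-entropy) one, so no labeling can be inferred from the data or from any other labeling, and the count $2^n\le 2^{|P|_2}$ cannot be circumvented by exploiting structure.

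For the \emph{lower bound} $D_{LM}(|P|)\ge|P|_2$, I would exhibit a digital classifier that attains it. Since the decoder is given the data, the classifier need only store the labeling itself; a lookup memory that writes the $m$-bit labeling of $m$ points verbatim into $m$ of its storage bits reproduces every one of the $2^m$ labelings without error. With a budget of $|P|_2$ bits this realizes all labelings of $m=|P|_2$ points in random position, so $D_{LM}(|P|)\ge|P|_2$. Combining the two inequalities gives $D_{LM}(|P|)=|P|_2$.

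I expect the main obstacle to be the converse direction, and specifically the justification that random position precludes any compression of the labeling. This is exactly the loophole that inflates the VC dimension for structured data, such as the distorted-grid construction discussed in Section~\ref{sec:prior}: if the points carried exploitable structure, a classifier could encode many labelings in fewer bits and beat the naive state count. Pinning down that the uniform distribution is the unique distribution consistent with random position, and hence that the labeling is genuinely incompressible, is the step that upgrades the counting argument from a mere inequality to the clean equality $D_{LM}(|P|)=|P|_2$.
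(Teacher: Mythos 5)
Your proposal is correct and follows essentially the same route as the paper: a pigeonhole/state-counting argument in the Shannon communication model, using the fact that random position forces the $2^n$ labelings to be incompressible so that losslessly distinguishing them requires $n \le |P|_2$ bits of parameters. The only difference is that you make the achievability direction explicit with a lookup-table construction, whereas the paper folds it into the assertion that $|P|_2$ bits suffice to represent $2^{|P|}$ equiprobable states (and relegates the impossibility of exceeding the bound to a remark about universal lossless compression after the proof).
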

\begin{proof}
At $D_{LM}(|P|)$, by definition, all of the $2^{|P|}$ different labeling functions can be learned by the classifier. Since the points in $P$ are in random position, the classifier cannot learn any inference rule. Thus the pigeon hole principle implies that reproducing all possible labels requires $\log_2(2^{|P|})=|P|$ bits. Thus $D_{LM}(|P|) = |P|_2$ bits are required to to guarantee to be able to represent any of the $2^{|P|}$ equiprobable states.
\end{proof}

We note that $D_{LM}(|P|) > |P|_2$ can be contradicted easily as it implies universal lossless compression and cascading of such classifiers would allow to store and transfer any set of points $P$ with $1$ bit. 

Let $NN$ be a set of parameters for an arbitrary feed-forward perceptron network that shatters a set of points $P$ in random position. Let $D_{LM}(|NN|)$ be the classifiers' lossless memory dimension. Again, we denote as $|NN|_{2}$ the number of bits used to represent these parameters.

\begin{corollary}
\label{cor:maxlmdnn}
$D_{LM}(|NN|) = |NN|_2$
\end{corollary}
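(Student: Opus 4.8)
The plan is to obtain the corollary as an immediate specialization of Lemma~\ref{lem:maxlmd}, whose pigeonhole proof was deliberately phrased for digital classifiers regarded as black boxes. The first step is to note that a feed-forward perceptron network, once instantiated at finite machine precision, is precisely such a black box: in the Shannon picture of Fig.~\ref{f:shannon} the object sent across the lossless channel is the weight vector $NN$, and the quantity the decoder must reconstruct is the labeling of $P$. The layering, weight sharing, and choice of gating function are therefore invisible to the counting argument, so I can apply the Lemma directly with the parameter vector $NN$ in the role played by the stored memory of a generic classifier.

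I would then present the argument as a verification that the hypotheses of Lemma~\ref{lem:maxlmd} hold for a network, after which both bounds transfer verbatim. The upper bound is transparent: a vector stored in $|NN|_2$ bits can assume at most $2^{|NN|_2}$ distinct values and hence realize at most $2^{|NN|_2}$ distinct classification functions, whereas shattering $n$ points in random position requires all $2^{n}$ labelings; thus $D_{LM}(|NN|) \leq |NN|_2$, the same bound furnished by the remark following the Lemma, since the opposite would amount to universal lossless compression. Equality then comes from the incompressibility that random position enforces: because no inference rule can be extracted from points in random position, the $2^{|P|}$ labelings guaranteed representable by the hypothesis ``$NN$ shatters $P$'' are mutually incompressible, so at the ideal operating point no smaller parameter budget can separate them and the network must use its full $|NN|_2$ bits. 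Combining the two yields $D_{LM}(|NN|) = |NN|_2$.

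The principal obstacle I anticipate is definitional rather than combinatorial. Section~\ref{sec:combining} declares the weights real-valued, for which $|NN|_2$ is formally infinite and the capacity would appear unbounded; I would dispose of this exactly as the Lemma does, by staying in the digital-classifier regime where each weight is held at finite precision so that $|NN|_2$ counts the bits actually used to store the network. A secondary point to handle with care is the shift of the accounting from the data $P$ to the parameters $NN$, which is legitimate only because random position strips the network of any chance to exploit geometric structure, forcing the parameters themselves to hold the labeling. Once this is granted, no property specific to neural networks enters, which is exactly why the statement is a corollary of Lemma~\ref{lem:maxlmd} rather than a separate theorem.
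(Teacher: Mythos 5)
Your proposal is correct and takes essentially the same route as the paper: the paper's own proof is a one-line specialization, observing that Lemma~\ref{lem:maxlmd} is universal to any binary classifier with digital weights and hence applies to a perceptron network treated as a black box. Your additional remarks on the incompressibility forced by random position and on finite-precision weights merely elaborate what the paper already delegates to the lemma and the remark following it.
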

\begin{proof}
Lemma~\ref{lem:maxlmd} is universal to any binary classification model using digital weights. The special case for perceptron networks is therefore implied, $max(D_{LM}(|NN|)) = |NN|_2$. 
\end{proof}

The above already implies a linearly scaling upper bound in the number of bits used by the parameters. However, we are able to make this bound tighter with the following derivation. It turns out in a perceptron network, each parameter is only able to store one bit losslessly, independent of how many bits are used for the parameter.

Assume a perceptron with a set of parameters $PC$ that shatters a set of points $P$ in random position. Let $D_{LM}(|PC|)$ be it's lossless memory dimension. 

We already know from Section~\ref{sec:mackay} that $D_{LM}(|PC|) = |PC|$. However, we will provide an alternative proof here. Each perceptron uses a function $f$ of the form $f(x)={\begin{cases}1&{\text{if }}\ w\cdot x>b\\0&{\text{otherwise}}\end{cases}}$ where $w$ is a vector of real numbers and $b$ is a single real number. $w\cdot x$ is the dot product $\sum _{i=1}^{m}w_{i}x_{i}$.

\begin{lemma}[Lossless Memory Dimension of a Perceptron]
\label{lem:lmdperc}
$D_{LM}(|PC|) = |PC|$
\end{lemma}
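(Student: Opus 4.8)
The plan is to reduce the statement to the threshold-function counting $T(n,k)$ introduced in Section~\ref{sec:mackay} and to read off the lossless memory dimension directly from the condition $T(n,|PC|)=2^n$. First I would fix the bookkeeping: writing the affine decision rule $f(x)=1$ exactly when $w\cdot x>b$ as a homogeneous threshold in one extra coordinate via the lift $x\mapsto(x,1)$, the bias becomes an additional weight, so the relevant dimension in $T$ is precisely the parameter count $|PC|$. Since for a single perceptron random position implies general position (as noted after Definition~\ref{def:led}), every dataset of $n$ points in random position has exactly $T(n,|PC|)$ realizable labelings, and by Definition~\ref{def:led} the value $D_{LM}(|PC|)$ is the largest $n$ for which this count attains its maximum $2^n$ for every such dataset.

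The two inequalities then follow from the explicit formula~\eqref{eq:tnk2}. For the lower bound I would invoke the identity $T(n,|PC|)=2^n$, valid whenever $|PC|\geq n$, which shows that every set of $n\leq |PC|$ points in general (hence random) position is fully shattered, so $D_{LM}(|PC|)\geq |PC|$; here I would lean on Cover's result~\cite{cover1965} that \emph{all}, not merely some, general-position sets of this size are shatterable, which is what the universal quantifier in Definition~\ref{def:led} demands. For the upper bound I would evaluate $T(|PC|+1,|PC|)=2\sum_{l=0}^{|PC|-1}\binom{|PC|}{l}=2^{|PC|+1}-2<2^{|PC|+1}$, so at least two labelings of $|PC|+1$ points are unrealizable for every choice of the real weights. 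Hence $D_{LM}(|PC|)<|PC|+1$, and combining the two bounds yields $D_{LM}(|PC|)=|PC|$.

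The step I expect to be the main obstacle is the upper bound, together with its interpretation. The subtlety is that the weights are real-valued and could a priori encode unboundedly many bits, so the argument must make clear that the cap $T(n,|PC|)<2^n$ for $n>|PC|$ is a purely combinatorial ceiling on the number of linearly inducible dichotomies that is completely independent of the precision used to store each weight. This is exactly the sense in which each parameter contributes only one bit of lossless memory, tightening the bit-based bound of Corollary~\ref{cor:maxlmdnn} from $|NN|_2$ down to the parameter count. A secondary care point is the quantifier gap between the existential VC dimension and the universal lossless memory dimension; reconciling them relies on the fact that, for halfspaces, general position makes shatterability uniform over all admissible datasets of a given size, so that the existential and universal thresholds coincide at $|PC|$.
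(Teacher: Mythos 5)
Your proof is correct, but it takes a genuinely different route from the one the paper actually writes down. The paper explicitly remarks before the lemma that the result ``is already known from Section~3'' via the $T(n,k)$ counting function --- which is precisely your argument --- and then deliberately gives an \emph{alternative} proof: it factors each weight as $w_i = s_i|w_i|$, argues that the magnitude is an irrelevant scaling factor so only the sign $s_i\in\{-1,1\}$ carries information, concludes that each parameter (including the bias, treated analogously) encodes exactly $\log_2 2 = 1$ bit so that $|PC|_2=|PC|$, and then invokes Corollary~\ref{cor:maxlmdnn} ($D_{LM}(|PC|)=|PC|_2$). That sign-decomposition argument is designed to set up the ``one bit per parameter'' intuition that the paper then scales additively to networks in Theorem~\ref{the:lmdnn}, but it is informal: the claim that the magnitudes do not affect which dichotomies are realizable is not literally true and is not rigorously justified. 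Your version --- lifting the bias into an extra coordinate, using $T(n,|PC|)=2^n$ for $n\le|PC|$ together with the fact that Cover's count is uniform over \emph{all} general-position sets (which is what the universal quantifier in Definition~\ref{def:led} requires), and computing $T(|PC|+1,|PC|)=2^{|PC|+1}-2<2^{|PC|+1}$ for the upper bound --- is the more rigorous of the two, and your explicit handling of the existential-versus-universal quantifier gap between $D_{VC}$ and $D_{LM}$ addresses a point the paper passes over with a one-line remark. What you lose relative to the paper is the direct bit-accounting narrative that the authors reuse for the network-level generalization; what you gain is a proof that rests on a precise combinatorial theorem rather than a heuristic about signs.
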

\begin{proof}

Case 1: $b=0$\\
Let $b=0$. We now rewrite $\sum _{i=1}^{m}w_{i}x_{i}$ to 
$\sum _{i=1}^{m}s_{i}|w_{i}|x_{i}$, where $|w_{i}|$ is the absolute value of $w_i$ and $s_i$ is the sign of $w_i$, this is $s_i \in \{-1,1\}$. 

It can be easily seen that, given an $x_i$, the choice of $s_i$ is the determining factor for the outcome of $f$. $w_i$ merely serves as a scaling factor on the $x_i$. We also know from the proof of Lemma~\ref{lem:maxlmd} that the magnitude of $x$ does not matter.  

Since $s_i \in \{-1,1\}$ and $|\{-1,1\}|=|\{0,1\}|=2$ it follows that each $s_i$ can be encoded using $\log_2 2=1$ bit. With $w_i$ and $x_i$ irrelevant, we can therefore assume $|PC|_2=i$. It follows $|PC|_2=|PC|$. With $D_{LM}(|PC|) = |PC|_2$ (Corollary~\ref{cor:maxlmdnn}), it follows that $D_{LM}(|PC|) = |PC|$.

Case 2: $b\neq 0$\\
Using the same trick as above, we can write $b=s|b|$, where $|b|$ is the absolute value of $b$ and $s$ is the sign of $b$, this is $s \in \{-1,1\}$. We can now divide the $w_i$ by $|b|$ and obtain $\sum _{i=1}^{m}s_{i}\frac{|w_{i}|}{|b|}x_{i}$ and consequently change the first case of $f$ to $w_m\cdot x>s$ ($w_m$ denotes $w$ modified as explained). Since $s$ is not dependent on $i$ and $P$ is in random position, $s$ can in the general case only be trained to correct the decision of one $x \in P$. This is again because $s \in \{-1,1\}$ and thus $|\{-1,1\}|=|\{0,1\}|=2$ it follows that $s$ encodes $\log_2 2=1$ bit. In analogy to case 1, $|PC|_2=i+1=|PC|$ and $D_{LM}(|PC|) = |PC|_2$ (Corollary~\ref{cor:maxlmdnn}), it follows that $D_{LM}(|PC|) = |PC|$.
\end{proof}

This upper bound can now be generalized to a network of perceptrons. 

\begin{theorem}[Lossless Memory Dimension of a Neural Network]
\label{the:lmdnn}
$D_{LM}(|NN|) = |NN|$
\end{theorem}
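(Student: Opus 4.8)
The plan is to reduce the network statement to the single-perceptron result of Lemma~\ref{lem:lmdperc} and then promote it through the network by additivity of bits. First I would decompose the feed-forward network into its constituent units $PC_1, \dots, PC_m$, where $PC_j$ carries $|PC_j|$ parameters (its incoming weights plus its bias). Since every weight and bias belongs to exactly one unit, the total parameter count splits cleanly as $|NN| = \sum_{j=1}^{m} |PC_j|$. The goal is then to show that the global effective description length is $|NN|_2 = |NN|$, because Corollary~\ref{cor:maxlmdnn} already supplies $D_{LM}(|NN|) = |NN|_2$, so that equality would immediately yield $D_{LM}(|NN|) = |NN|$.

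Next I would apply the sign-only argument of Lemma~\ref{lem:lmdperc} unit by unit, establishing $|PC_j|_2 = |PC_j|$ for each $j$: within any single perceptron only the sign of each weight (and of the bias) determines the $0/1$ output, while the magnitude acts as an irrelevant scaling factor, so each parameter stores exactly one bit. The extra observation needed to make this local argument legitimate inside a network is that every unit emits a \emph{binary} value through its threshold function. Consequently a perceptron sitting in a deeper layer still receives a well-defined binary input vector, and its own weights again contribute one bit apiece, independent of whatever magnitudes were chosen upstream. The binary intermediate representations are exactly what let the single-unit argument be reused verbatim at every layer.

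Finally I would assemble the per-unit counts into the global count. Invoking the additivity of information in the Shannon model used in Section~\ref{sec:mackay} (the capacities of the individual encoders add, the same linear additivity anticipated there for the MacKay point), I would write $|NN|_2 = \sum_{j=1}^{m} |PC_j|_2 = \sum_{j=1}^{m} |PC_j| = |NN|$, and combine this with Corollary~\ref{cor:maxlmdnn} to conclude $D_{LM}(|NN|) = |NN|$.

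I expect the additivity in the last step to be the main obstacle, since it must be argued in both directions. The upper direction --- that the wired-together network cannot encode \emph{more} than the sum of its parts --- follows cleanly from the global pigeonhole bound behind Lemma~\ref{lem:maxlmd} and Corollary~\ref{cor:maxlmdnn}, which already caps $D_{LM}(|NN|)$ at $|NN|_2$. The delicate half is ruling out redundancy: I must show that no parameter becomes \emph{free} once the units interact, i.e.\ that each of the $|NN|$ signs genuinely contributes its bit in the composed map. This is where the random-position hypothesis on $P$ and the independence of the individual sign decisions have to be used carefully, so that the binary outputs feeding forward neither collapse nor duplicate the bit accounted to each weight.
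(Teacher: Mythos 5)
Your proposal follows essentially the same route as the paper's own proof: decompose the network into its constituent perceptrons, invoke Lemma~\ref{lem:lmdperc} to get one bit per parameter in each unit, and sum by additivity of capacity to obtain $D_{LM}(|NN|) = \sum |PC_j| = |NN|$. The one point you flag as delicate --- showing that no parameter becomes redundant when the units are wired together, i.e.\ the achievability direction of the additivity --- is likewise asserted rather than proven in the paper (which simply states that adding lossless memory cells adds their capacities and treats the result as an upper bound), so your candid identification of that gap does not put you behind the published argument.
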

\begin{proof}
Assume a perceptron network composed of $i$ perceptrons each with a set of parameters ${PC}_i$. It follows that the set of parameters in the neural network is $NN=\cup PC_i$. This is $|NN|=\sum |{PC}_i|$. We know from Lemma~\ref{lem:lmdperc} that $D_{LM}(|PC|) = |PC|$. This is an upper bound so adding bits to the same number of parameters in one perceptron has no effect. As a consequence, $D_{LM}(\sum |{PC}_i|)=\sum |{PC}_i|$. By simple substitution it follows that $D_{LM}(|NN|) = |NN|$.
\end{proof}

We note that this proof is consistent with MacKay's interpretation. Each perceptron with weights $k$ (including bias) is able to implement exactly $T(n,k)$ different binary threshold functions over $n$ sample points. With the maximum number of binary labelings of $n$ points being $2^n$, it follows that the perceptron is at $D_{LM}$ when $T(n,k)=2^n$. It is then able to maximally store $n$ bits. In general, adding two lossless memory cells with capacity $n$ and $m$ increases their capacity to $n+m$ bits. For lossy memory cells this is an upper limit -- which is all we are interested in. 

We also note that a shape imposed on the activation function does not play any role in theory: it is merely data processing on a decision made by the inequality.  

\begin{theorem}[MacKay Dimension of a Neural Network]
\label{the:mknn}
$D_{MK}(|NN|) = 2|NN|$
\end{theorem}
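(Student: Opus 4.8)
The plan is to reduce the network statement to the single-perceptron case and then argue that the individual MacKay points add. First I would establish the base case: for a single perceptron with $k = |PC|$ weights, $D_{MK}(|PC|) = 2k$. This is essentially MacKay's result and follows directly from the threshold-function count in Equation~\ref{eq:tnk2}. Evaluating $T(n,k)$ at $n = 2k$ gives
\begin{equation}
T(2k,k) = 2\sum_{l=0}^{k-1}\binom{2k-1}{l} = 2\cdot 2^{2k-2} = 2^{2k-1},
\end{equation}
where the middle equality uses the symmetry $\binom{2k-1}{l} = \binom{2k-1}{2k-1-l}$, which splits the full sum $\sum_{l=0}^{2k-1}\binom{2k-1}{l} = 2^{2k-1}$ into two equal halves of $k$ terms each. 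Since the number of possible labelings of $n = 2k$ points is $2^{2k}$, the representable fraction $T(2k,k)/2^{2k}$ is \emph{exactly} $1/2$, placing the single-perceptron MacKay point precisely at $n = 2k$.

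Next I would establish additivity of the MacKay point across the perceptrons of the network. Writing the network as $NN = \cup PC_i$ with $|NN| = \sum_i |PC_i|$, exactly as in the proof of Theorem~\ref{the:lmdnn}, it suffices to show $D_{MK}(|NN|) = \sum_i D_{MK}(|PC_i|) = \sum_i 2|PC_i| = 2|NN|$. The mechanism is the Gaussian error model already invoked in Section~\ref{sec:mackay}: for points in random position, the fraction of non-representable labelings of a single perceptron, viewed as a function of $n$, is well approximated by a cumulative Gaussian whose mean sits at its MacKay point $2|PC_i|$ and whose variance scales with $|PC_i|$. Treating the per-perceptron capacity shortfalls as independent contributions, their aggregate is again Gaussian with mean equal to the sum of the individual means. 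Because the $50\%$ crossing of a Gaussian lies at its mean, the network's crossing lands at $\sum_i 2|PC_i| = 2|NN|$.

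Finally I would close the argument in the upper-bound direction, consistent with the rest of the paper. As with the lossless-memory cells in the remark following Theorem~\ref{the:lmdnn}, the additivity of means holds exactly only in the ideal (independent) case and otherwise furnishes an upper limit, so I would phrase the conclusion as the best-case value $D_{MK}(|NN|) = 2|NN|$ and note that the claimed ``exactly $50\%$'' is attained by an ideal network. The main obstacle is the additivity step: the Gaussian approximation to $T(n,k)/2^n$ and, more importantly, the independence assumption that lets the means simply add are where the argument becomes heuristic rather than combinatorial. Making it fully rigorous would require controlling how the compositional, non-linear structure of a deep network combines the individual perceptron error curves, rather than merely asserting that the bit capacities---and hence the means of the associated error functions---are additive.
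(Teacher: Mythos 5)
Your proposal is correct and follows essentially the same route as the paper: the key identity $T(2k,k)=\tfrac{1}{2}T(2k,2k)=2^{2k-1}$ placing the single-perceptron MacKay point exactly at $n=2k$, followed by additivity of the per-perceptron capacities as a best-case upper bound. In fact your version is more explicit than the paper's own proof, which merely asserts that ``each perceptron can maximally equally contribute''; your Gaussian-mean-additivity justification is the same heuristic the paper itself invokes in Section~\ref{sec:mackay}, and your closing caveat about the non-rigorous independence step applies equally to the published argument.
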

\begin{proof}
Assume a network at $D_{LM}(|NN|)=|P|$. Now let $P_2$ be a set of points in random position with $|P_2|=2|P|$. As discussed in Section~\ref{sec:mackay}, $T(2k,k)=\frac{1}{2}T(2k,2k)$. This is, doubling $|P|$ for a fixed $k$ results in each neuron being able to memorize the labeling of half of all points. As this is an upper limit, each perceptron can maximally equally contribute to the labeling of the additional points. It follows that $D_{Mk}=2 D_{LM}=2|NN|$.
\end{proof}

\begin{corollary}[Capacity Scaling of Perceptron Networks]
\begin{equation} \label{eq:cle}
\sum\limits_{j=1}^l C(P_j) = C\left(\sum\limits_{j=1}^l P_j\right)
\end{equation}
where $P_j$ is an arbitrary perceptron with $n_j$ inputs including a potential offset weight. The capacity $C$ is either $C=D_{MK}$ or $C=D_{LM}$ depending on the targeted phase. $\sum P_j$ denotes a neural network that combines the respective perceptrons perfectly and the data points are assumed to be in random position. 
\end{corollary}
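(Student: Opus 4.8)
The plan is to reduce the claimed identity to the two capacity theorems already established, handling $C=D_{LM}$ and $C=D_{MK}$ as two parallel cases. The single fact that glues everything together is that combining the perceptrons \emph{perfectly} into the network $\sum_j P_j$ neither shares nor discards parameters, so the parameter count is additive: if $P_j$ contributes $n_j$ weights (inputs plus offset), then $\left|\sum_{j=1}^l P_j\right| = \sum_{j=1}^l |P_j| = \sum_{j=1}^l n_j$. This is precisely the hypothesis exploited in the proof of Theorem~\ref{the:lmdnn}, where $NN = \bigcup PC_i$ yields $|NN| = \sum |PC_i|$. I would state this additivity of parameter counts as the first step, since both cases descend from it.

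For the case $C = D_{LM}$, I would apply Lemma~\ref{lem:lmdperc} to each summand on the left, obtaining $C(P_j) = D_{LM}(|P_j|) = |P_j| = n_j$, so that $\sum_{j=1}^l C(P_j) = \sum_{j=1}^l n_j$. On the right, Theorem~\ref{the:lmdnn} applied to the combined network gives $C\left(\sum_{j=1}^l P_j\right) = D_{LM}(|NN|) = |NN| = \sum_{j=1}^l n_j$. The two sides coincide, establishing Eq.~\eqref{eq:cle} for the lossless-memory phase.

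For the case $C = D_{MK}$, the argument is identical up to a uniform factor of two. Theorem~\ref{the:mknn} gives the MacKay dimension of any network as twice its parameter count, so $C(P_j) = 2n_j$ and hence $\sum_{j=1}^l C(P_j) = 2\sum_{j=1}^l n_j$, while $C\left(\sum_{j=1}^l P_j\right) = 2|NN| = 2\sum_{j=1}^l n_j$. Since the factor of two distributes over the sum, Eq.~\eqref{eq:cle} holds with equality again, and the corollary follows by choosing the phase.

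The step I expect to be the real crux, rather than a routine substitution, is justifying that the composition is genuinely \emph{additive} and not merely subadditive. The capacity theorems deliver upper bounds, and a priori wiring perceptrons together could only \emph{lose} capacity through interference or parameter reuse. What forces exact equality is the \emph{random position} assumption: because no subset of the data reveals anything about the remaining points, each weight independently stores its single bit (Lemma~\ref{lem:lmdperc}), and the network can neither exploit cross-perceptron structure to exceed the sum nor be driven below it. Pinning down that ``combines perfectly'' encodes exactly this disjoint, interference-free composition is therefore the one place where I would be careful, since the entire linearity claim rests on it.
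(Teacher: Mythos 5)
Your proposal is correct and follows essentially the same route the paper intends: the corollary is stated without an explicit proof precisely because it is the direct combination of Lemma~\ref{lem:lmdperc}, Theorem~\ref{the:lmdnn}, and Theorem~\ref{the:mknn} with additivity of parameter counts, which is exactly what you carry out. Your closing remark about why the random-position assumption forces equality rather than mere subadditivity matches the paper's own caveat that Equation~\eqref{eq:cle} degrades to an inequality when the data is not in random position.
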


This is, the upper bound of neural networks scales linearly in the amount of parameters. Practically, Equation~\ref{eq:cle} is an inequality ``$\geq$'' when the data is not in random position because the network should be able to exploit redundancies. On the other hand, many neural network implementations we measured turned out to be not maximally efficient (see Chapter~\ref{sec:experiments}).  

\section{Capacity Measurements}
\label{sec:experiments}
\label{s:mlp}
This section describes our evaluation of LM and MK dimension using empirical means. We observe that our theoretical capacities are indeed upper limits. 

\begin{figure*}[t!]
\centering
\includegraphics[width=0.49\textwidth]{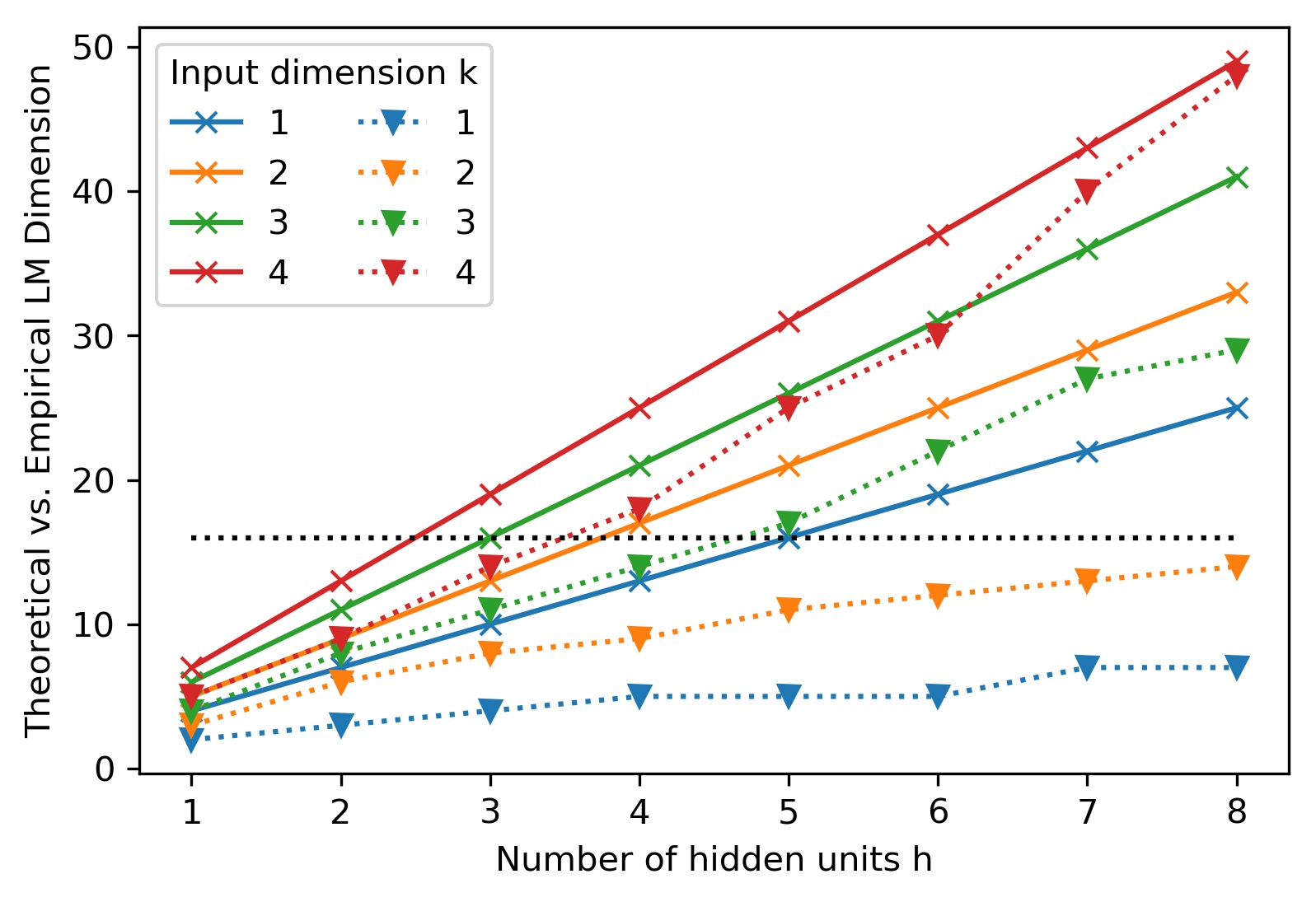}
\includegraphics[width=0.49\textwidth]{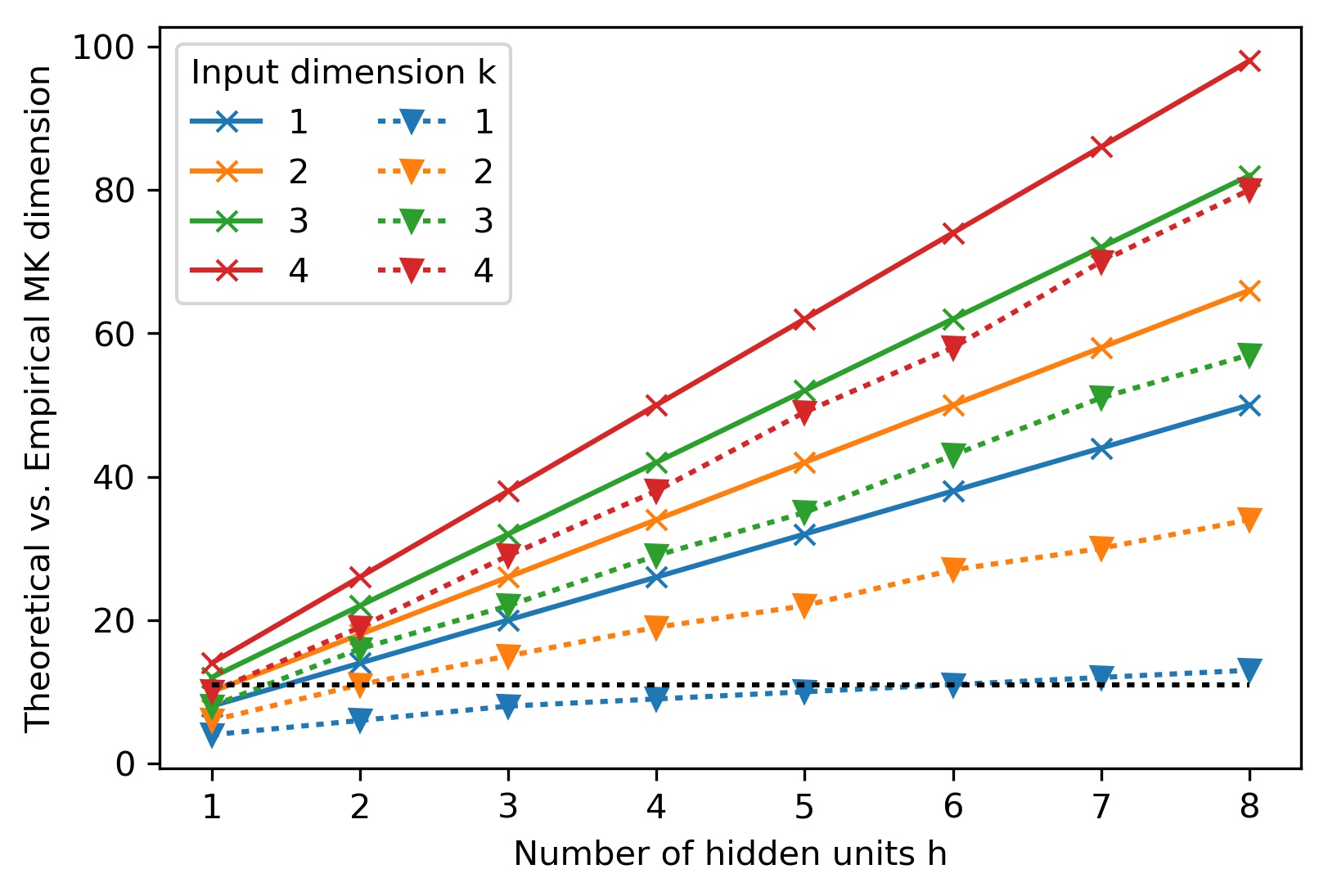}
\caption{\label{f:EVC}
Experimental results for LM dimension (left) and MK dimension (right). Displayed are the functional dependency on k (top) and on h (bottom).
The solid lines depict the theoretical boundaries whereas the respective dotted lines display our empirical results. The black lines display the number of samples where not all labelings are tested anymore but a random sample, which makes the empirical results less reliable.}
\end{figure*}

\subsection{Experimental Setup}
\label{sec:expsetup}

The basic principle for our empirical evaluation is to obtain samples from randomly generated data and increase the number of input points to the network step-by-step to test if the network can learn all possible labelings for the LM dimension or half of the possible labelings for the MK dimension. 

Obviously, we expect our empirical measurements to be lower than the theoretical capacities. Practically, neither the ideal network nor the perfect training algorithm exists. Furthermore, for higher dimensions, we were only able to sample from the hypothesis space and could not test  all labelings exhaustively. Therefore our goal was to create the best conditions possible and give the network the highest chance of reaching optimal capacity without violating the constraints of the theoretical framework. Thereby some practical workarounds are required for speedup and some limitations arise due to the exponential increase of the search space. 

We mainly used the MLP implementation in scikit-learn~\cite{Pedregosa2011} with L-BFGS~\cite{Liu1989} as optimizer. Our code is provided on the companion website to this article (see Section~\ref{sec:conclusion}). To control the randomness and ensure consistent results, we seed the randomizers with the respective index of the repetition. In case the optimizer does not fit the training data, we repeat its training up to $20$ times. Our data was randomly generated by sampling from a normal distribution. We repeated evaluations with up to $20$ different datasets if a labeling could not be fitted in the case of the LM dimension or if $50\%$ of the labelings could not be fitted in the case of the MK dimension. The processing time of the latter is much higher for two reasons. First, a larger amount of samples has to be analyzed since at least $50\%$ of all labelings have to be evaluated every time. Second, with more data the convergence of the MLP takes more iterations.

For completeness, every labeling would have to be tested. Due to symmetry in the class handling by the MLP, a minor speedup was achieved by testing only labelings where the last sample was labeled with a "0" and not a "1". This was not possible for large LM dimensions. Testing more than $2^{15}$ labelings was computationally too expensive for us. Hence, for more than $15$ samples, we tested only a random selection of $2^{15}$ labelings. Due to this approximation, results might be above the true values for the given structure. The processing effort of the MK dimension is even worse and required to have a limit of $2^{10}$ samples. Given more resources, one could imagine a better approach where multiple random samplings are tested and the median result for the MK dimension and the worst result for the LM dimension is taken. We leave this as future work.

The number of tested labels also limits the possible dimensions of the MLP. We analyzed input dimensions: $[1,2,3,4]$. $1$ did not provide reliable results. For the number of hidden nodes, we looked at $[1,2,3,4,5,6,7,8]$. Our implementation does not consider the difficulties of an MLP with class imbalance or redundancies. Here, higher empirical dimensions due to oversampling might be achievable.



\begin{figure*}
\centering
\includegraphics[width=0.49\textwidth]{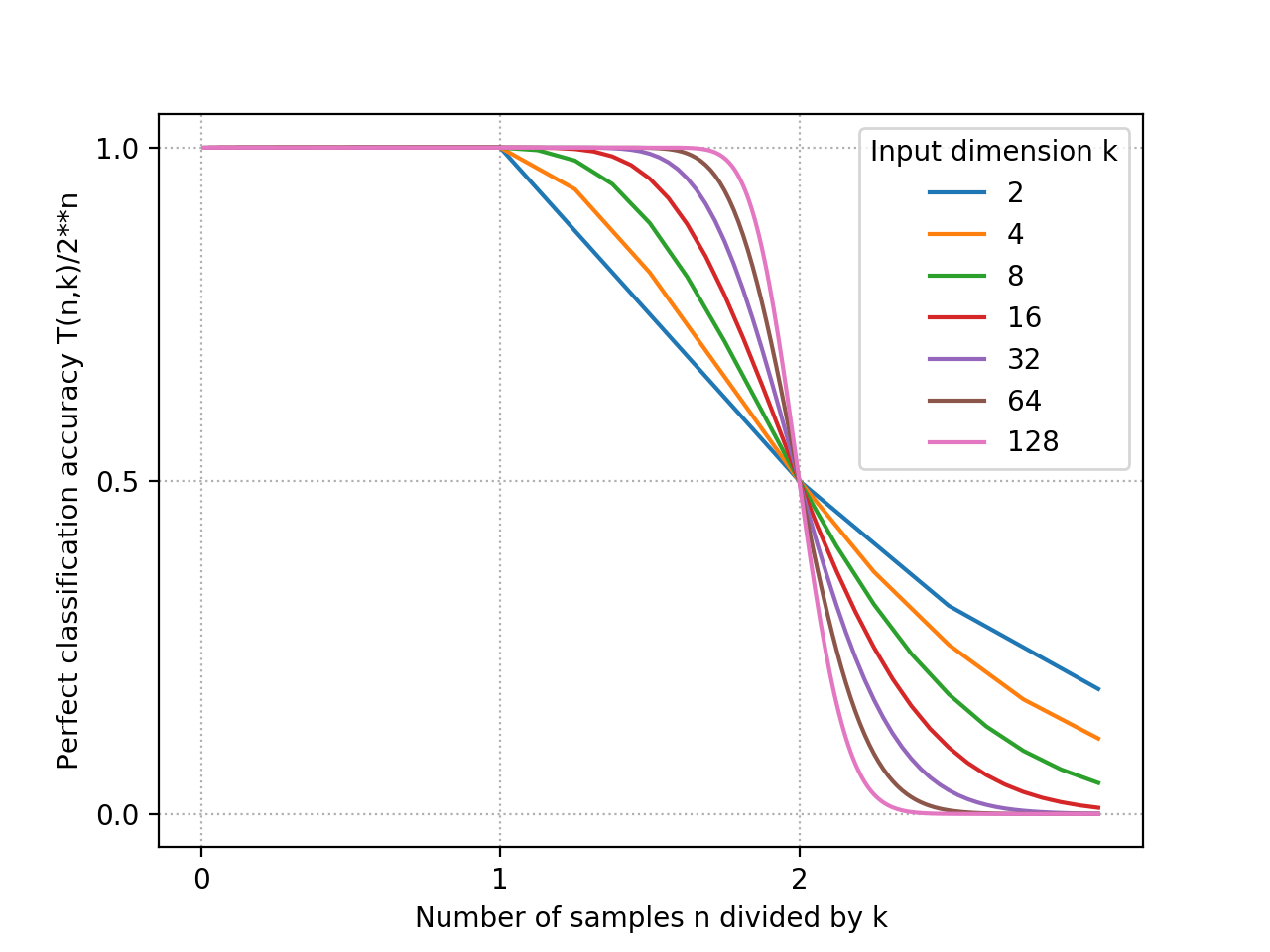}
\includegraphics[width=0.49\textwidth]{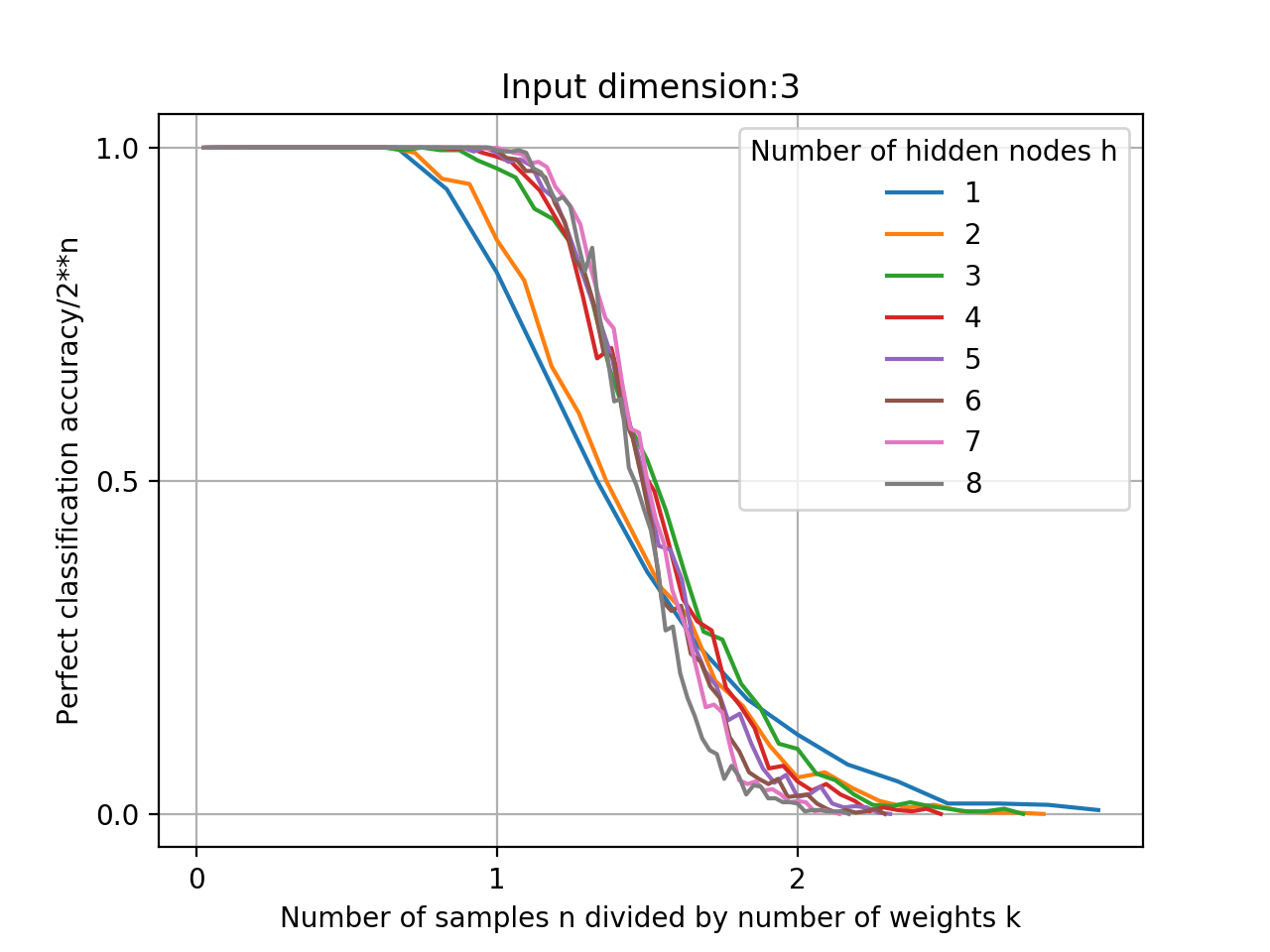}
\caption{
Left: \label{f:tnk}Characteristic curve examples of the $T(n,k)$ function for different input dimension $k$
with monotonic behavior and the two crucial points at $n=k$ for the VC dimension and $n=2k$
for the MK dimension. Right: \label{f:realtnk} Measured characteristic curve example. $x=1$ being the theoretic LM dimension and $x=2$ being the theoretic MK dimension (see Section~\ref{s:chareval}).}
\end{figure*}

\subsection{Tuning the Implementation}
Apart from the aforementioned implementation, we tested all other optimizers like ``Adam'' and ``SGD'' as well as the Keras library~\cite{chollet2015keras}. 
In most cases, the net was \textbf{not} able to fit the data in contrast to using L-BFGS. Hence the measured dimensions were very low. 
This could be interpreted as generalization capability of Adam and SGD because the optimizer is avoiding overfitting. Note, that L-BFGS approximates the second order derivative which makes it more accurate but also computationally more expensive and prone to get stuck in local minima.

We also tested different gating functions. Using the identity function, the network mostly behaved like a single perceptron as expected. For $tanh$ and logistic function, results looked similar to the ReLU function but needed more repetitions and processing time.

As expected, the generation of the data had a significant impact on the results. Originally, we tested with uniformly sampled data. Changing it to sampling from a normal distribution improved our results dramatically (i.\,e., the empirically measured upper bound came closer to the theoretical). The number of different tested datasets using the same distribution had only a minor effect on when the empirical calculation reached its limit in LM or MK dimension. The testing of more than one dataset was solely to capture the randomness in the training algorithm and had no significant impact on the empirical results. 

Using just one hidden neuron behaves always like a single perceptron with LM dimension $k+1$ and MK dimension $2(k+1)$. The predicted linear relationship in the number of hidden neurons $h$ as well as in $k$ for both dimensions can also be observed. The comparison between theoretical and empirical LM dimensions shows a similar linear behavior. For the larger LM dimensions, the differences get smaller but this is probably due to sampling error. For the VC dimensions, it is more important to test all labelings because a single misclassification has an impact, whereas for the MK dimension this effect is less severe. This could be improved in the future with more processing power.

We observed that the empirical MK dimension is extremely close to twice the empirical VC dimension. This is expected from the theoretical derivations but considering the aforementioned practical shortcuts, the clarity of this result increases our confidence in the validation experiments.

The empirical values for the VC dimension come quite close to $hk+1$ for small numbers which is $2h$ off from the optimal value. By increasing the number of iterations and tested datasets, we also detected three special cases that are worth pointing out here. For an MLP with $2$ hidden nodes and input dimensions of $3$, $4$, or $5$, we found a dataset example of $9$, $11$, or $13$ samples respectively that could be shattered. In those cases, we tested all labelings. Those sample values are exactly one sample higher than $h(k+1)$ and therefore above the storage capabilities of the hidden layer. Hence, the output neuron is making a significant contribution to the resulting learning capabilities, as predicted by the memory capacity formulation in this paper. 
 
We also performed experiments with going deeper than one layer and, as expected, there was no more than linear increase in the capacity of the network. In fact, in case of using small $k$ and $h$, the obtained results were better by just one sample compared to the respective LM dimension with a one hidden layer architecture. In most cases, we observed that the LM dimension was actually far below the empirical values of a respective network with one hidden layer. This can be explained by the data processing inequality and is left for future work.

\label{s:chareval}
Keeping the characteristic curves in Figure~\ref{f:tnk} in mind, it is also interesting how the characteristic curves of real networks look like when scaling by the theoretic
LM and MK dimension. Therefore, we used a similar evaluation but with only one dataset, only up to $2^{10}$ labelings,
and $50$ repetitions for the MLP optimization. 
For those we calculated the percentage of correctly learned labelings.
The results are depicted in Figure~\ref{f:realtnk}. It can be clearly seen that non-ideal networks still follow the characteristic behavior as it can be proven for the $T(n,k)$ function. However, the true transition points for LM and MK dimension are slightly shifted to the left.

\section{Conclusion}
\label{sec:conclusion}
We present an alternative understanding of neural networks using information theory. We show that the information capacity of a perceptron network scales maximally linearly with the number of parameters. The main trick is to train the network with random points. This way, no inference (generalization) is possible and the best thing any machine learner can do is memorize. We then determine how many parameters a neural network needs to have to be able to reproduce all possible labelings given these random points as input. The result is an upper bound on the size of the neural network as real world data is never random. This is, the inference ability of the network will often allow it to use less parameters and, assuming a perfectly implemented network, using as many parameters as for the random point scenario would be over fitting. As a consequence, a network at a larger capacity than LM dimension is, theoretically speaking, a waste of resources. On the other hand, if one wants to guarantee that a certain function can be learned, this is the theoretical number of parameters to use. However, when practically measuring concrete neural networks implementations with varying architectures and learning strategies, we found that their effectiveness actually varies dramatically (always below the theoretical upper limit). While this effectiveness measurement is exponential in run time, it only needs to be performed on a small representative subnet as capacity scales linearly. Therefore capacity measurement alone allows for a task-independent comparison of neural network variations. While our work is an extension of the initial work by David MacKay, this article is the first to generalize the critical points to multiple perceptrons and derive a concrete scaling law. Our experiments show that linear scaling holds practically and our theoretical bounds are actionable upper bounds for engineering purposes. All the tested threshold-like activation functions, including sigmoid and ReLU exhibited the predicted behavior -- just as explained in theory by the data processing inequality. Our experimental methodology serves as a benchmarking tool for the evaluation of neural network implementations. Using points in random position, one can test any learning algorithm and network architecture against the theoretical limit both for performance and efficiency (convergence rate). Future work in continuation of this research will explore tighter bounds, for example architecture-dependent capacity. Estimating the capacity needed for a given data set and ground truth will be another line of research. 

A web demo showing how capacity can be used is available at: \url{http://tfmeter.icsi.berkeley.edu}. Our experiments are available for repetition at: \url{ https://github.com/multimedia-berkeley/deep_thoughts}

\section*{Acknowledgements}
This work was performed under the auspices of the U.S. Department of Energy by Lawrence Livermore National Laboratory under Contract DE-AC52-07NA27344. It was also partially supported by a Lawrence Livermore Laboratory Directed Research \& Development grants (17-ERD-096 and 18-ERD-021). IM release number LLNL-TR-736950. Mario Michael Krell was supported by the Federal Ministry of Education and Research (BMBF, grant no. 01IM14006A) and by a fellowship within the FITweltweit program of the German Academic Exchange Service (DAAD). Any findings and conclusions are those of the authors, and do not necessarily reflect the views of the funders. We want to cordially thank Ra\'ul Rojas for in depth discussion on the chaining of the $T()$ function. We also want to thank Alfredo Metere, Jerome Feldman, Kannan Ramchandran, Alexander Fabisch, Jan Hendrik Metzen, Bhiksha Raj, Naftali Tishby, Jaeyoung Choi, Friedrich Sommer and Andrew Feit for their insightful advise and Barry Chen and Brenda Ng for their support. 

\bibliographystyle{abbrv}
\bibliography{main}

\newpage
\clearpage
\begin{algorithm}[ht!]
\centering
\begin{lstlisting}[frame=single,framexrightmargin=-0.0\textwidth,
                   basicstyle=\footnotesize,language=Python]
#Python 2.7 code for measuring the LM dimension 
#with 1 hidden layer
N = 80  # Maximum number of samples
K = [1,2,3,4]  # Analyzed dimensions
# Analyzed numbers of hidden layers
H = [1,2,3,4,5,6,7,8]
# Maximum number of samples, 
max_l = 15  # for random labelings
import itertools
import numpy
import random
from sklearn.neural_network \
	import MLPClassifier
	
print('n', 'k', 'h', 'correct', 'rate')
for k in K:  # input dimension
  numpy.random.seed(0)
  for h in H:  # number of hidden layers
    numpy.random.seed(0)
    for n in range(N):  # dataset size
      n += 1 # We start with one sample.
      data_res = [] # Good results
      # first label is fixed to be zero
      l_len = min(n-1,max_l-1)
      # 20 different random datasets
      for r_data in range(20):
        numpy.random.seed(r_data)
        # normal distributed data
        data = numpy.random.normal(
        	size=[N,k])
        numpy.random.seed(0)
        true_results = 0
        for label_int in range(2**l_len):
          if max_l < n:
            label_int = \
              random.randint(0, 2**(n-1))
          labels = [int(i) for i in bin(
            label_int*2+2**(N+2))[-n:]]
          d = data[:n]
          converged = False
          # repeated runs till converged
          for r_mlp in range(20): 
            clf = MLPClassifier(
              hidden_layer_sizes=(h,), 
              random_state=r_mlp,
              activation='relu', 
              solver='lbfgs', alpha=0)
            clf.fit(d, labels)
            p = clf.predict(d)
            if (p == labels).all():
              true_results += 1
              converged = True
              break  # short converged
          if not converged:
            break  # shortcut after miss
        data_res.append(true_results)
        # All labelings correct?
        if true_results == 2**l_len:
          break
      true_results = max(data_res)
      print(n, k, h, true_results, 
        true_results*1.0/2**l_len)
      if true_results*1.0/2**l_len<0.95:
          break
\end{lstlisting}
\label{c:VC}
\end{algorithm}
\begin{algorithm}[ht!]
\centering
\begin{lstlisting}[frame=single,framexrightmargin=-0.0\textwidth,
                   basicstyle=\footnotesize,language=Python]
#Python 2.7 code for measuring the MK dimension 
#with 1 hidden layer
N = 120  # Maximum number of samples
K = [1,2,3,4]  # Analyzed dimensions
# Analyzed numbers of hidden layers
H = [1,2,3,4,5,6,7,8]
# Maximum number of samples, 
max_l = 10  # for random labelings
import itertools
import numpy
import random
from sklearn.neural_network \
	import MLPClassifier
print("n","k","h","correct", "rate")
for k in K:  # input dimension
  numpy.random.seed(0)
  for h in H:  # number of hidden layers
    numpy.random.seed(0)
    for n in range(N):  # dataset size
      n += 1  # We start with one sample
      if n <= 2*(h)*(k-1)+k+1:
        continue  # shortcut
      data_res = []  # Good results
      # first label is fixed to be zero
      l_len = min(n-1,max_l-1)
      # 20 different datasets
      for r_data in range(20):
        numpy.random.seed(r_data)
        data = numpy.random.normal(
        	size=[N,k])
        numpy.random.seed(0)
        true_results = 0
        for label_int in range(2**l_len):
          index = label_int
          if max_l < n:
            label_int = \ 
              random.randint(0, 2**(n-1))
          labels = [int(i) for i in bin(
            label_int*2+2**(N+2))[-n:]]
          d = data[:n]
          converged = False
          for r_mlp in range(20):
            clf = MLPClassifier(
              hidden_layer_sizes=(h,), 
              random_state=r_mlp,
              activation='relu', 
              solver="lbfgs", alpha=0)
            clf.fit(d, labels)
            p = clf.predict(d)
            if (p == labels).all():
              true_results += 1
              converged = True
              break  # short converged
          # 50% labelings correct?
          if true_results>=2**(l_len-1):
            break  # short success
          if index-true_results > 2**(
              l_len-1):
            break  # short fail
        if true_results >= 2**(l_len-1):
          data_res.append(true_results)
          break  # short success
        data_res.append(true_results)
      true_results = max(data_res)
      print(n, k, h, true_results, 
        true_results*1.0/2**l_len)
      if true_results*1.0/2**l_len<0.45:
        break
\end{lstlisting}
\label{c:MK}
\end{algorithm}

\end{document}